\documentclass[]{article}



\usepackage[final,nonatbib]{neurips_2019}


\usepackage[utf8]{inputenc} 
\usepackage[T1]{fontenc}    
\usepackage{hyperref}       
\usepackage{url}            
\usepackage{booktabs}       
\usepackage{amsfonts}       
\usepackage{nicefrac}       
\usepackage{microtype}      
\usepackage[english]{babel}

\newtheorem{theorem}{Theorem}
\newtheorem{definition}{Definition} 
\newtheorem{proof}{Proof}

\usepackage[export]{adjustbox}
\usepackage{graphicx}
\usepackage{amsmath}
\usepackage{algorithm}
\usepackage{algorithmic}
\usepackage{xcolor}
\usepackage{multirow}
\usepackage{bbding}
\usepackage{amssymb}
\usepackage{diagbox}
\usepackage{adjustbox}
\usepackage[scaled=.87]{helvet}

\usepackage{subfig}
\urlstyle{same}

\usepackage{xcolor}

\title{Embedding Symbolic Knowledge into Deep Networks
}
\author{%
  Yaqi Xie\thanks{Equal contribution and the rest of the authors are ordered alphabetically by last name.}, \ \  Ziwei Xu\footnotemark[1],  \ \  Mohan S Kankanhalli, \ \  Kuldeep S. Meel,\ \ Harold Soh\\
  School of Computing \\
  National University of Singapore \\
  \texttt{\{yaqixie, ziwei-xu, mohan, meel, harold\}@comp.nus.edu.sg} \\
}

\begin{document}

\maketitle

\begin{abstract}
In this work, we aim to leverage prior symbolic knowledge to improve the performance of deep models. We propose a graph embedding network that projects propositional formulae (and assignments) onto a manifold via an augmented Graph Convolutional Network (GCN). To generate semantically-faithful embeddings, we develop techniques to recognize node heterogeneity, and semantic regularization that incorporate structural constraints into the embedding. Experiments show that our approach improves the performance of models trained to perform entailment checking and visual relation prediction. Interestingly, we observe a connection between the tractability of the propositional theory representation and the ease of embedding. Future exploration of this connection may elucidate the relationship between knowledge compilation and vector representation learning.  
\end{abstract}
\section{Introduction}

The recent advances in design and training methodology of deep neural networks~\cite{LeCunBH15} have led to wide-spread application of machine learning in diverse domains such as medical image classification~\cite{Litjens2017} and game-playing~\cite{Silver2017}. Although demonstrably effective on a variety of tasks, deep NNs have voracious appetites; obtaining a good model typically requires large amounts of labelled data, even when the learnt concepts could be described succinctly in symbolic representation. As a result, there has been a surge of interest in techniques that combine symbolic and neural reasoning~\cite{BSBB17} including a diverse set of approaches to  inject existing prior domain knowledge into NNs, e.g., via knowledge distillation~\cite{Hu2016HarnessingDN}, probabilistic priors~\cite{ansari2019hyperprior}, or auxiliary losses~\cite{Xu2018}. However, doing so in a scalable and effective manner remains a challenging open problem. One particularly promising approach is through learned embeddings, i.e., real-vector representations of prior knowledge, that can be easily processed by NNs~\cite{Kim2016,allamanis2016,evans2018,Zhu2015,Tai2015}. 

In this work, we focus on embedding symbolic knowledge expressed as \emph{logical rules}. In sharp contrast to connectionist NN structures, logical formulae are explainable, compositional, and can be explicitly derived from human knowledge. 
Inspired by insights from the knowledge representation community, this paper investigates embedding alternative representation languages to improve the performance of deep networks.
To this end, we focus on two languages: Conjunctive Normal Form (CNF) and decision-Deterministic Decomposable Negation Normal Form (d-DNNF)~\cite{Darwiche2001b,Darwiche2002}. Every Boolean formula can be succinctly represented in CNF, but CNF is intractable for most queries of interest such as satisfiability. On the other hand, representation of Boolean formula in d-DNNF may lead to exponential size blowup, but d-DNNF is tractable (polytime) for most queries such as satisfiability, counting, enumeration and the like~\cite{Darwiche2002}. 


In comparison to prior work that treat logical formulae as symbol sequences, CNF and d-DNNF formulae are naturally viewed as graphs structures. Thus, we utilize recent Graph Convolutional Networks (GCNs)~\cite{Kipf2017} (that are robust to relabelling of nodes) to embed logic graphs. We further employ a novel method of \emph{semantic regularization} to learn embeddings that are semantically consistent with d-DNNF formulae. In  particular, we augment the standard GCN to recognize node heterogeneity and introduce soft constraints on the embedding structure of the children of \textsf{AND} and \textsf{OR} nodes within the logic graph. An overview of our \textbf{L}ogic \textbf{E}mbedding \textbf{N}etwork with \textbf{S}emantic \textbf{R}egularization (LENSR) is shown in Fig \ref{fig:framework_overview}.


\begin{figure}
\centerline{\includegraphics[page=2,width=0.95\textwidth]{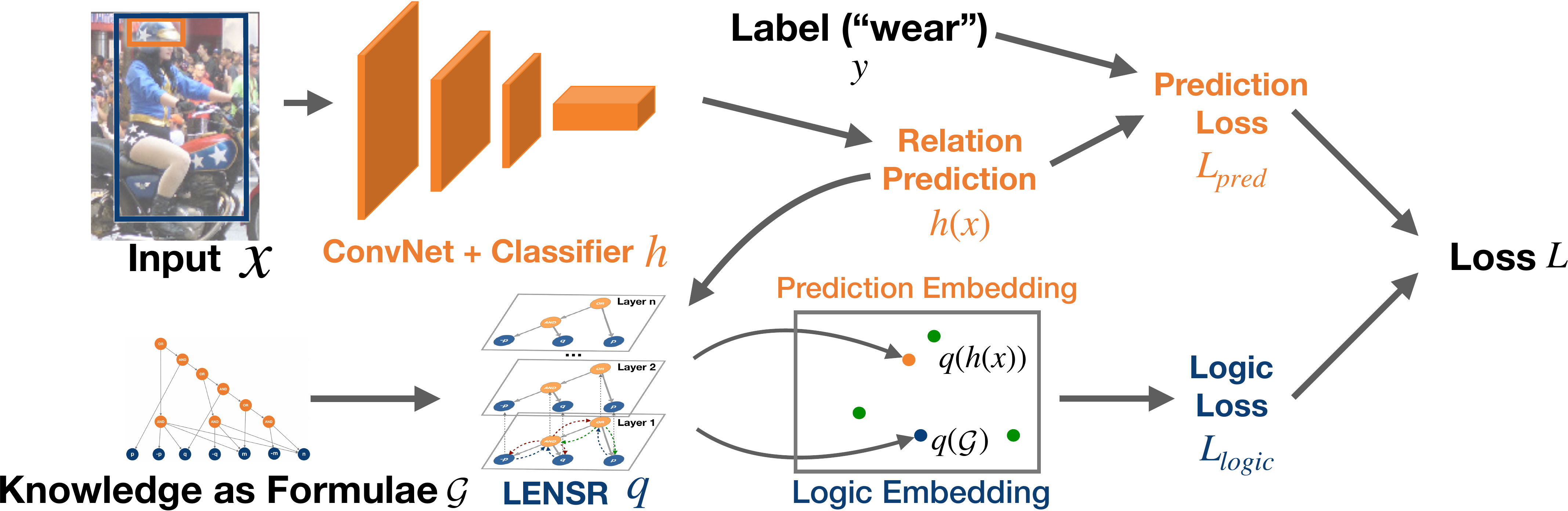}}
\caption{LENSR overview. Our GCN-based embedder projects logic graphs representing formulae or assignments onto a manifold where entailment is related to distance; satisfying assignments are closer to the associated formula. Such a space enables fast approximate entailment checks --- we use this embedding space to form logic losses that regularize deep neural networks for a target task. 
%
}
\label{fig:framework_overview}
\end{figure}

Once learnt, these logic embeddings can then be used to form a \emph{logic loss} that guides NN training; the loss encourages the NN to be consistent with prior knowledge. 
Experiments on a synthetic model-checking dataset show that LENSR is able to learn high quality embeddings that are predictive of formula satisfiability. As a real-world case-study, we applied LENSR to the challenging task of Visual Relation Prediction (VRP) where the goal is to predict relations between objects in images. Our empirical analysis demonstrates that LENSR significantly outperforms baseline models. Furthermore, we observe that LENSR with d-DNNF achieves a significant performance improvement over LENSR with CNF embedding. We propose the notion of {\em embeddable-demanding} to capture the observed behavior of a plausible relationship between tractability of representation language and the ease of learning vector representations.  


To summarize, this paper contributes a framework for utilizing logical formulae in NNs. Different from prior work, LENSR is able to utilize d-DNNF structure to learn semantically-constrained embeddings. To the best of our knowledge, this is also the first work to apply GCN-based embeddings for logical formulae, and experiments show the approach to be effective on both synthetic and real-world datasets. Practically, the model is straight-forward to implement and use. We have made our source code available online at \texttt{\url{https://github.com/ZiweiXU/LENSR}}.  Finally, our evaluations suggest a connection  between the tractability of a normal form and its amenability to embedding; exploring this relationship may reveal deep connections between knowledge compilation~\cite{Darwiche2002} and vector representation learning.

\section{Background and Related Work}
\label{sec-related-works}

\paragraph{Logical Formulae,  CNF and d-DNNF}
Logical statements provide a flexible declarative language for expressing structured knowledge. In this work, we focus on \emph{propositional logic}, where a \emph{proposition} $\mathsf{p}$ is a statement which is either \textsf{True} or \textsf{False}. A \emph{formula} $\mathsf{F}$ is a compound of propositions connected by logical connectives, e.g.$ \neg, \wedge, \vee, \Rightarrow$. An \emph{assignment} $\tau$ is a function which maps propositions to \textsf{True} or \textsf{False}. An assignment that makes a formula $\mathsf{F}$ \textsf{True} is said to \emph{satisfy} $\mathsf{F}$, denoted $ \tau \models \mathsf{F} $.

A formula that is a conjunction of clauses (a disjunction of literals) is in Conjunctive Normal Form (CNF). 
Let $X$ be the set of propositional variables. A sentence in Negation Normal Form (NNF) is defined as a rooted directed acyclic graph (DAG) where each leaf node is labeled with \textsf{True}, \textsf{False}, $x, \text{ or } \neg x, x\in X$; and each internal node is labeled with $\land$ or $\lor$ and can have arbitrarily many children. Deterministic Decomposable Negation Normal Form (d-DNNF)~\cite{Darwiche2001b,Darwiche2002} further imposes that the representation is: 
(i) \textbf{Deterministic:} An NNF is deterministic if the operands of $\lor$ in all well-formed boolean formula in NNF are mutually inconsistent;
(ii) \textbf{Decomposable:} An NNF is decomposable if the operands of $\land$ in all well-formed boolean formula in the NNF are expressed on a mutually disjoint set of variables.
In contrast to CNF and more general forms, d-DNNF has many desirable \emph{tractability} properties (e.g., polytime satisfiability and polytime model counting). These tractability properties make d-DNNF particularly appealing for complex AI applications~\cite{Darwiche2001}. 

Although building d-DNNFs is a difficult problem in general, practical compilation can often be performed in reasonable time. 
We use \texttt{c2d}~\cite{Darwiche2004NewAI}, which can compile relatively large d-DNNFs; in our experiments, it took less than 2 seconds to compile a d-DNNF from a CNF with 1000 clauses and 1000 propositions on a standard workstation. 
Our GCN can also embed other logic forms expressible as graphs and thus, other logic forms (e.g., CNF) could be used when d-DNNF compilation is not possible or prohibitive

\paragraph{Logic in Neural Networks}
Integrating learning and reasoning remains a key problem in AI and encompasses various methods, including logic circuits~\cite{Liang2019}, Logic Tensor Networks~\cite{Donadello2017,SerafiniG2016}, and knowledge distillation~\cite{Hu2016HarnessingDN}. Our primary goal in this work is to incorporate symbolic domain knowledge into connectionist architectures. Recent work 
can be categorized into two general approaches. 

The first approach augments the training objective with an additional logic loss as a means of applying soft-constraints~\cite{Xu2018,StewartE2017,Tim2015,Demeester2016}. For example, the semantic loss used in \cite{Xu2018} quantifies the probability of generating a satisfying assignment by randomly sampling from a predictive distribution. 
The second approach is via embeddings, i.e., learning vector based representations of symbolic knowledge that can be naturally handled by neural networks. For example, the ConvNet Encoder~\cite{Kim2016} embeds formulae (sequences of symbols) using a stack of one-dimensional convolutions. TreeRNN~\cite{allamanis2016} and TreeLSTM encoders~\cite{Tai2015,Le2015,Zhu2015} recursively encode formulae  using recurrent neural networks. 

This work adopts the second embedding-based approach and adapts the Graph Convolutional Network (GCN)~\cite{Kipf2017} towards embedding logical formulae expressed in d-DNNF. The prior work discussed above have focused largely on CNF (and more general forms), and have neglected d-DNNF despite its appealing properties. Unlike the ConvNet and TreeRNN/LSTM, our GCN is able to utilize semantic information inherent in the d-DNNF structure, while remaining invariant to proposition relabeling.

\section{Logic Embedding Network with Semantic Regularization}
\label{sec-method}

\begin{figure*}
\centering
\subfloat[General Form]{
    \includegraphics[page=4,height=1.8cm]{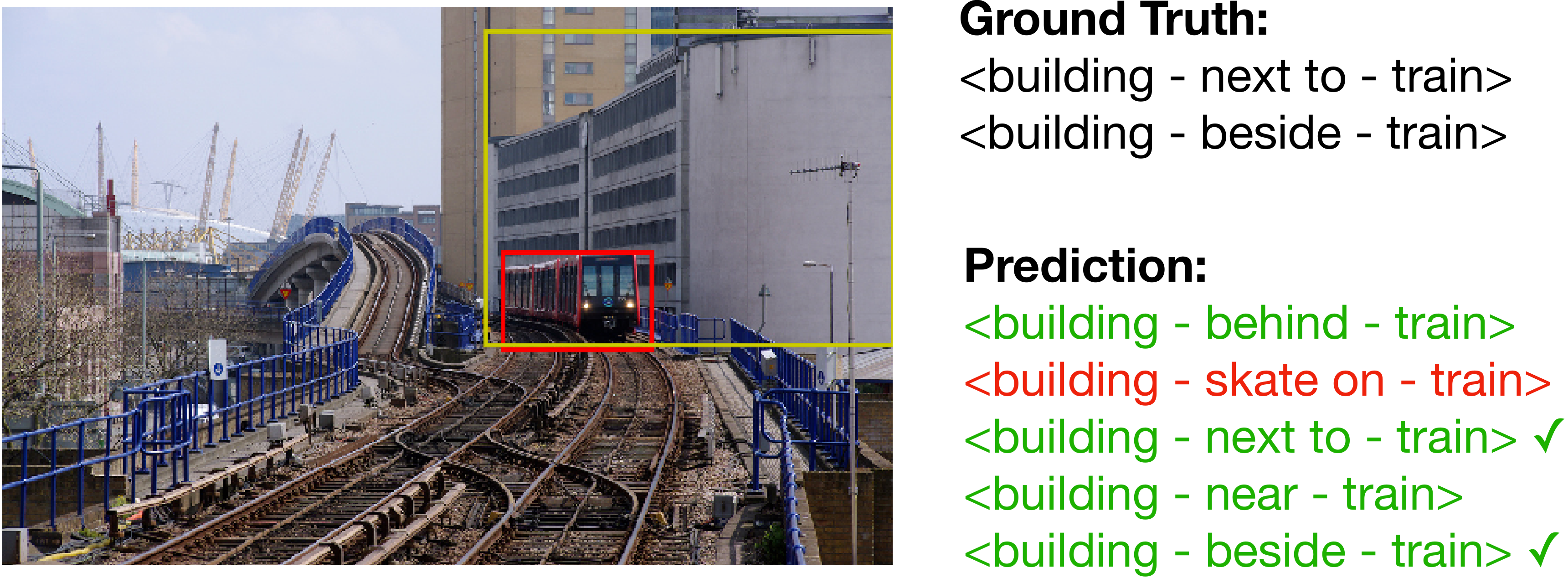}
}\quad
\subfloat[CNF]{
    \includegraphics[page=5,height=1.8cm]{figures/analysis.pdf}
}\quad
\subfloat[d-DNNF]{
    \includegraphics[page=6,height=1.8cm]{figures/analysis.pdf}
}\quad
\subfloat[d-DNNF DAG]{
    \includegraphics[page=9,height=1.8cm]{figures/analysis.pdf}
    \label{fig:graph_examples_ddnnf}
}
\caption{(a)--(c) Logic graphs examples of the formula $(\mathsf{p}\implies \mathsf{q})\land \mathsf{m} \land \mathsf{n}$ in (a) General form, (b) CNF, (c) d-DNNF. This formula could encode a rule for ``person wearing glasses'' where $\mathsf{p}$ denotes $\textsf{wear(person,glasses)}$, $\mathsf{q}$ denotes $\textsf{in(glasses,person)}$, $\mathsf{m}$ denotes $\textsf{exist(person)}$ and $\mathsf{n}$ denotes $\textsf{exist(glasses)}$. (d) An example DAG showing a  more complex d-DNNF logic rule.} 
\label{fig:graph_examples}
\end{figure*}



In this section, we detail our approach, from logic graph creation to model training and eventual use on a target task. 
As a guide, Fig.~\ref{fig:framework_overview} shows an overview of our model. 
LENSR specializes a GCN for d-DNNF formulae.  
A logical formula (and corresponding truth assignments) can be represented as a directed or undirected graph $\mathcal{G}=(\mathcal{V,E})$ with $N$ nodes, $v_i\in\mathcal{V}$, and edges $\left(v_i,v_j\right)\in\mathcal{E}$. Individual nodes are either propositions (leaf nodes) or logical operators ($\land, \lor, \Rightarrow$), where subjects and objects are connected to their respective operators. In addition to the above nodes, we augment the graph with a global node, which is linked to all other nodes in the graph. 

As a specific example (see Fig.~\ref{fig:graph_examples}), consider an image which contains a person and a pair of glasses. We wish to determine the relation between them, e.g., whether the person is wearing the glasses. We could use spatial logic to reason about this question; if the person is wearing the glasses, the image of the glasses should be ``inside'' the image of the person. Expressing this notion as a logical rule, we have: 
$\textsf{(wear(person,glasses)}\implies \textsf{in(glasses, person))} \land  \textsf{exist(person)} \land \textsf{exist(glasses)}$. Although the example rule above results in a tree structure, d-DNNF formulae are DAGs in general.



\subsection{Logic Graph Embedder with Heterogeneous Nodes and Semantic Regularization}

We embed logic graphs using a multi-layer Graph Convolutional Network~\cite{Kipf2017}, which is a first-order approximation of localized spectral filters on graphs~\cite{Hammond2011,Defferrard2016}. The layer-wise propagation rule is,
\begin{equation}
	Z^{(l+1)}=\sigma\left(\tilde{D}^{-\frac{1}{2}}\tilde{A}\tilde{D}^{-\frac{1}{2}}Z^{(l)}W^{(l)}\right)
\end{equation}
where $Z^{(l)}$ are the learnt latent node embeddings at $l^{th}$ (note that $Z^{(0)}=X$), $\tilde{A}=A+I_N$ is the adjacency matrix of the undirected graph $\mathcal{G}$ with added self-connections via the identity matrix $I_N$. $\tilde{D}$ is a diagonal degree matrix with $\tilde{D}_{ii}=\sum_j\tilde{A}_{ij}$.
The layer-specific trainable weight matrices are $W^{(l)}$, and $\sigma(\cdot)$ denotes the activation function. 
To better capture the semantics associated with the logic graphs, we propose two modifications to the standard graph embedder: heterogenous node embeddings and semantic regularization. 

\paragraph{Heterogeneous Node Embedder.} In the default GCN embedder, all nodes share the same set of embedding parameters. However, different \emph{types} of nodes have different semantics, e.g., compare an $\Rightarrow$ node v.s. a proposition node. Thus, learning may be improved by using distinct information propagation parameters for each node type. Here, we propose to use type-dependent logical gate weights and attributes, i.e., a different $W^{(l)}$ for each of the five node types (leaf, global, $\land,\lor,\Rightarrow$).

\paragraph{Semantic Regularization.} d-DNNF logic graphs possess certain structural/semantic constraints, and we propose to incorporate these constraints into the embedding structure. More precisely, we regularize the children embeddings of $\land$ gates to be orthogonal. This intuitively corresponds to the constraint that the children do not share variables (i.e., $\land$ is decomposable).  Likewise, we propose to constrain the $\lor$ gate children embeddings to sum up to a unit vector, which corresponds to the constraint that one and only one child of $\lor$ gate is true (i.e., $\lor$ is deterministic). The resultant semantic regularizer loss is:
\begin{equation}\
\label{eq-sr}
\begin{split}
\ell_{r}(\mathsf{F}) =  \sum_{v_i \in \mathcal{N}_{O}} \Big\| \sum_{\substack{\text{element-wise} \\  v_j \in \mathcal{C}_{i}}} q(v_j) -\mathbf{1} \Big\|_2^2 + \sum_{v_k \in \mathcal{N}_{A}} \| V_k^T V_k - \textrm{diag}(V_k^T V_k) \|_2^2,
\end{split}
\end{equation}
where $q$ is our logic embedder, $\mathcal{N}_O $ is the set of $\lor$ nodes, $ \mathcal{N}_A $ is the set of $\land$ nodes, $ \mathcal{C}_* $ is the set of child nodes of $ v_* $, $ V_k = [ q(v_1), q(v_2), ..., q(v_l) ] $ where $ v_l \in \mathcal{C}_k $. 

\subsection{Embedder Training with a Triplet Loss}
As previously mentioned, LENSR minimizes distances between the embeddings of formulae and satisfying assignments in a shared latent embedding space. To achieve this, we use a \emph{triplet loss} that encourages formulae embeddings to be close to satisfying assignments, and far from unsatisfying assignments. 

Formally, let $ q(\cdot) $ be the embedding produced by the modified GCN embedder. Denote $q(\mathsf{F})$ as the embedding of d-DNNF logic graph for a given formula, and $ q(\tau_\mathsf{T})$ and $q(\tau_\mathsf{F})$ as the assignment embeddings for a satisfying and unsatisfying assignment, respectively. For assignments, the logical graph structures are simple and shallow; assignments are a conjunction of propositions $\mathsf{p}\land  \mathsf{q} \land \dots \mathsf{z}$ and thus, the pre-augmented graph is a tree with one $\land$ gate. Our triplet loss is a hinge loss:
\begin{equation}
\label{eq-marginLoss}
\ell_t(\mathsf{F}, \tau_\mathsf{T}, \tau_\mathsf{F}) = \max \{d(q(\mathsf{F}), q(\tau_\mathsf{F})) - d(q(\mathsf{F}), q(\tau_\mathsf{T})) + m, 0\}, 
\end{equation}
where $ d(x,y) $ is the squared Euclidean distance between vector $ x $ and vector $ y $, $ m $ is the margin. We make use of SAT solver, \texttt{python-sat}~\cite{Ignatiev18PySAT}, to obtain the satisfying and unsatisfying assignments. Training the embedder entails optimizing a combined loss:
\begin{equation}
	L_{\textrm{emb}} = \sum_{\mathsf{F}}\sum_{ \tau_\mathsf{T}, \tau_\mathsf{F}} \ell_t(\mathsf{F}, \tau_\mathsf{T}, \tau_\mathsf{F}) + \lambda_{r} \ell_{r}(\mathsf{F}),
	\label{eq-loss}
\end{equation}
where $ \ell_t $ is the triplet loss above, $ \ell_{r} $ is the semantic regularization term for d-DNNF formulae, and $ \lambda_{r} $ is a hyperparameter that controls the strength of the regularization. The summation is over formulas and associated pairs of satisfying and unsatisfying assignments in our dataset. In practice, pairs of assignments are randomly sampled for each formula during training.

\subsection{Target Task Training with a Logic Loss}
\label{subsec:tgtmodeltraining}
Finally, we train the target model $h$ by augmenting the per-datum loss with a logic loss $\ell_{\textrm{logic}}$ :
\begin{equation}
\ell = \ell_{\textrm{c}} + \lambda \ell_{\textrm{logic}},
\end{equation}
where 
$\ell_{\textrm{logic}} = \|q(\mathsf{F_x}) - q( h(x) )\|_2^2 $ is the embedding distance between the formula related to the input $x$ and the predictive distribution $h(x)$,
$\ell_c$ is the task-specific loss (e.g., cross-entropy for classification), and $ \lambda $ is a trade-off factor. Note that the distribution $h(x)$ may be any relevant predictive distribution produced by the network, including intermediate layers. As such, intermediate outputs can be regularized with prior knowledge for later downstream processing. To obtain the embedding of $h(x)$ as $q(h(x))$, we first compute an embedding for each predicted relationship $\mathsf{p}_i$ by taking an average of the relationship embeddings weighted by their predicted probabilities. Then, we construct a simple logic graph $G = \bigwedge_{i} \mathsf{p}_i$, which is embedded using q.

\section{Empirical Results: Synthetic Dataset}
\label{sec-synthetic}

\begin{figure*}
\centering
\subfloat[]{
    \includegraphics[width=0.325\columnwidth, trim={0 0 0 0}, clip]{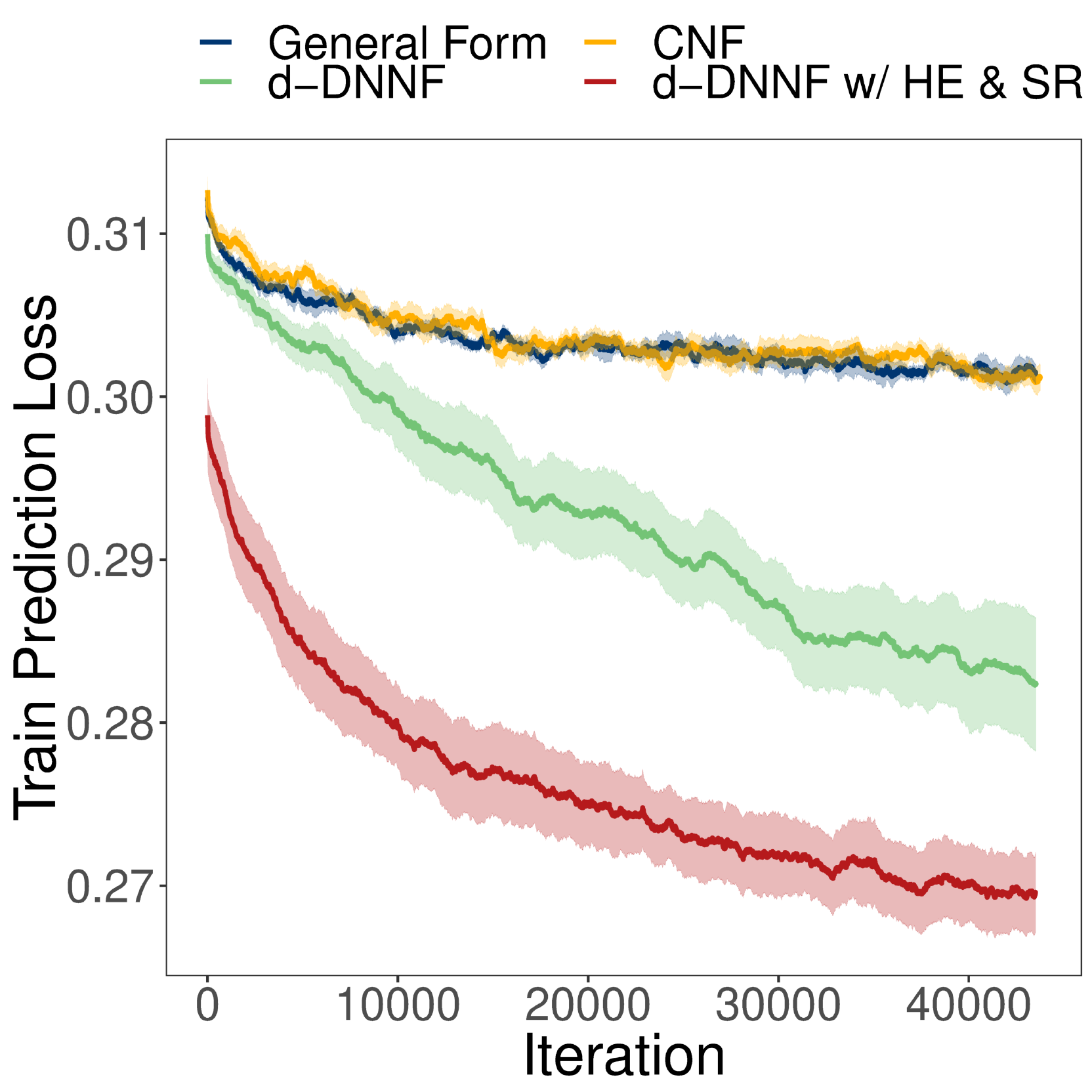}
    \label{fig:perf_vs_iter}
}
\subfloat[]{
    \includegraphics[width=0.3095\columnwidth, trim={0 -1ex 0 0}, clip]{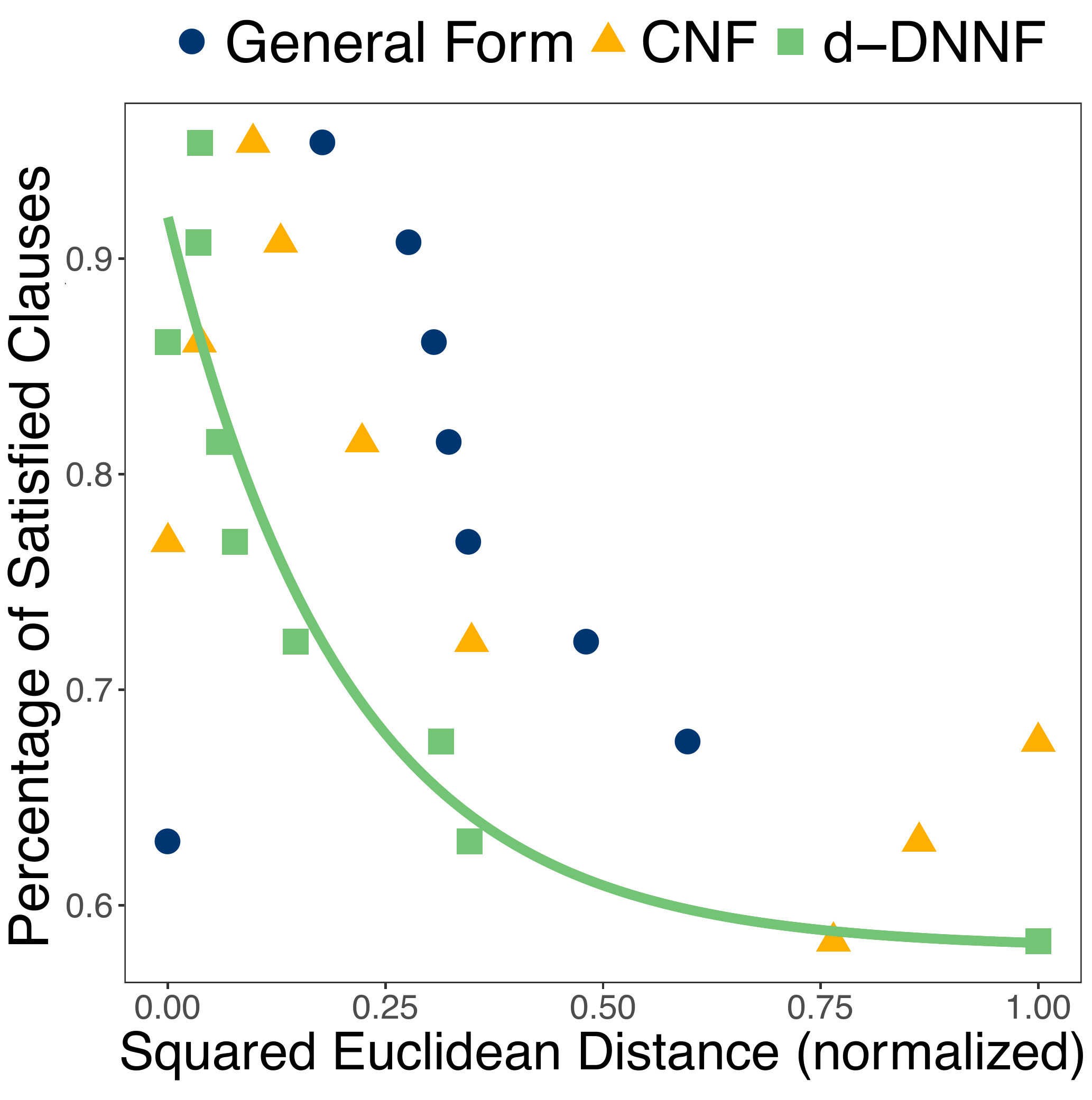}
    \label{fig:sat_vs_dist}
}
\subfloat[]{
    \includegraphics[width=0.314\columnwidth, trim={0 0 0 -1ex}, clip ]{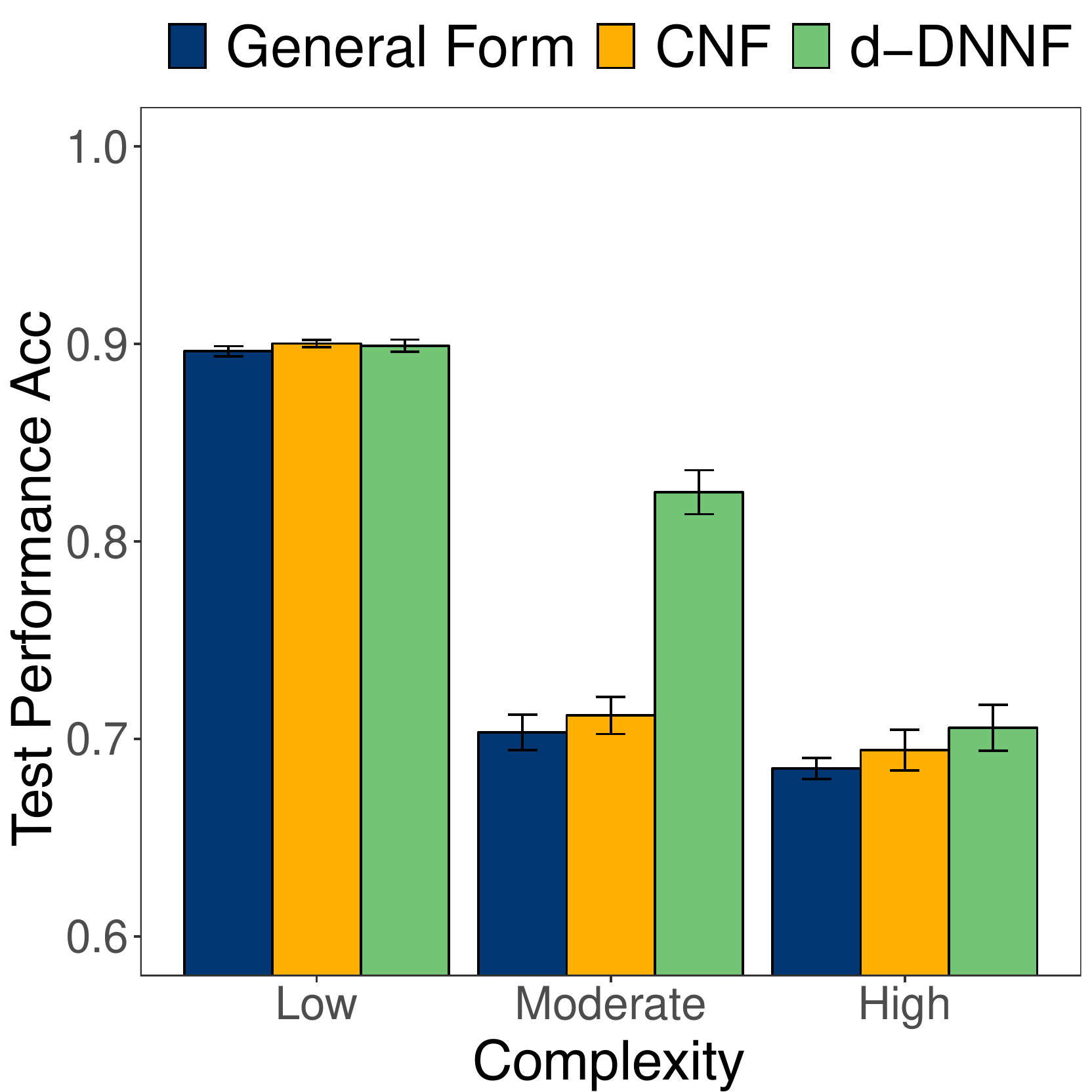}
    \label{fig:perf_vs_complexity}
}
\caption{\label{fig:insights} (a) Prediction loss (on the training set) as training progressed (line shown is the average over 10 runs with shaded region representing the standard error); (b) Formulae satisfiability v.s. distance in the embedding space, showing that LENSR learnt a good representation by projecting d-DNNF logic graphs; (c) Test accuracies indicate that the learned d-DNNF embeddings outperform the general form and CNF embeddings, and are more robust to increasing formula complexity.} 
\end{figure*}

 In this section, we focus on validating that d-DNNF formulae embeddings are more informative relative to embeddings of general form and CNF formulae. Specifically, we conduct tests using a entailment checking problem; given the embedding of a formula $ \textsf{f} $ and the embedding of an assignment $ \tau $, predict whether $ \tau $ satisfies $ \textsf{f} $.


\paragraph{Experiment Setup and Datasets.} We trained 7 different models using general, CNF, and d-DNNF formulae (with and without heterogenous node embedding and semantic regularization). For this test, each LENSR model comprised 3 layers, with 50 hidden units per layer. LENSR produces 100-dimension embedding for each input formula/assignment. The neural network used for classification is a 2-layer perceptron with 150 hidden units. We set $ m=1.0 $ in Eqn.~\ref{eq-marginLoss} and $ \lambda_{r}=0.1 $ in Eqn.~\ref{eq-loss}. We used grid search to find reasonable parameters. 

To explicitly control the complexity of formulae, we synthesized our own dataset. The complexity of a formula is (coarsely) reflected by its number of variables $ n_v $ and the maximum formula depth $ d_m $. We prepared three datasets with $ (n_v, d_m) = (3, 3), (3, 6), (6, 6) $ and label their complexity as ``low'', ``moderate'', and ``high''. We attempted to provide a good coverage of potential problem difficulty: the ``low'' case represents easy problems that all the compared methods were expected to do well on, and the ``high'' case represents very challenging problems. For each formula, we use the \texttt{python-sat} package~\cite{Ignatiev18PySAT} to find its satisfying and unsatisfying assignments. There are 1000 formulae in each difficulty level.
We take at most 5 satisfying assignments and 5 unsatisfying assignments for each formula in our dataset. We converted all formulae and assignments to CNF and d-DNNF.

\begin{table}
\small 
	\centering
	\caption{Prediction accuracy and standard error over 10 independent runs with model using different forms of formulae and regularization. Standard error shown in brackets. ``HE'' means the model is a heterogeneous embedder, ``SR'' means the model is trained with semantic regularization. ``\checkmark'' denotes ``with the respective property`` and ``-'' denotes ``Not Applicable''. The best scores are in bold.} 
	\begin{tabular}{c|c|c|c|c|c}
		\toprule
		\multirow{2}{*}{Formula Form} & \multirow{2}{*}{HE} & \multirow{2}{*}{SR} & \multicolumn{3}{c}{Acc.(\%)} \\ 
		 &  &  & Low & Moderate & High \\ \specialrule{0.8pt}{1pt}{1pt}
		General & - & - & 89.63 (0.25) & 70.32 (0.89) & 68.51 (0.53) \\ \hline
		\multirow{2}{*}{CNF} &  & - & 90.02 (0.18)& 71.19 (0.93) & 69.42 (1.03) \\
		& \checkmark & - & 90.25 (0.15)& 73.92 (1.01) & 68.79 (0.69) \\ \hline 
		\multirow{4}{*}{d-DNNF} &  &  & 89.91 (0.31) & 82.49 (1.11) & 70.56 (1.16) \\
		&  & \checkmark  & 90.22 (0.23) & 82.28 (1.40) & 71.46 (1.17) \\  
		& \checkmark  &  & 90.27 (0.55) & 81.30 (1.29)& 70.54 (0.62) \\ 
		& \checkmark  & \checkmark  & \textbf{90.35} (0.32) & \textbf{83.04} (1.58) & \textbf{71.52} (0.54) \\ \bottomrule
	\end{tabular}
	\label{table-eq-embedder}
\end{table}

\paragraph{Results and Discussion.}
Table~\ref{table-eq-embedder} summarizes the classification accuracies across the models and datasets over 10 independent runs.
In brief, the heterogeneous embedder with semantic regularization trained on d-DNNF formulae outperforms the alternatives. We see that semantic regularization works best when paired with heterogeneous node embedding; this is relatively unsurprising since the \textsf{AND} and \textsf{OR} operators are regularized differently and distinct sets of parameters are required to propagate relevant information.

In our experiments, we found the d-DNNF model to converge faster than the CNF and general form (Fig.~\ref{fig:perf_vs_iter}). Utilizing both semantic regularization and heterogeneous node embedding further improves the convergence rate. The resultant embedding spaces are also more informative of satisfiability; Fig. \ref{fig:sat_vs_dist} shows that the distances between the formulae and associated assignments better reflect satisfiability for d-DNNF. This results in higher accuracies (Fig.~\ref{fig:perf_vs_complexity}), particularly on the moderate complexity dataset. We posit that the differences on the low and high regimes were smaller because (i) in the low case, all the methods performed reasonably well, and (ii) on the high regime, embedding the constraints helps to a limited extent and points to avenues for future work. 

Overall, these results provide empirical evidence for our conjecture that d-DNNF are more amenable to embedding, compared to CNF and general form formulae.

\section{Visual Relation Prediction}
\label{sec-vrp}


In this section, we show how our framework can be applied to a real-world task --- Visual Relation Prediction (VRP) --- to train improved models that are consistent with both training data and prior knowledge. The goal of VRP is to predict the correct relation between two objects given visual information in an input image. We evaluate our method on VRD~\cite{Lu2016Visual}. 
The VRD dataset contains 5,000 images with 100 object categories and 70 annotated predicates (relations). 
For each image, we sample pairs of objects and induce their spatial relations. If there is no annotation for a pair of object in the dataset, we label it as having ``no-relation''. 


\paragraph{Propositions and Constraints}
\label{ssec-prop-cons}

\begin{figure*}
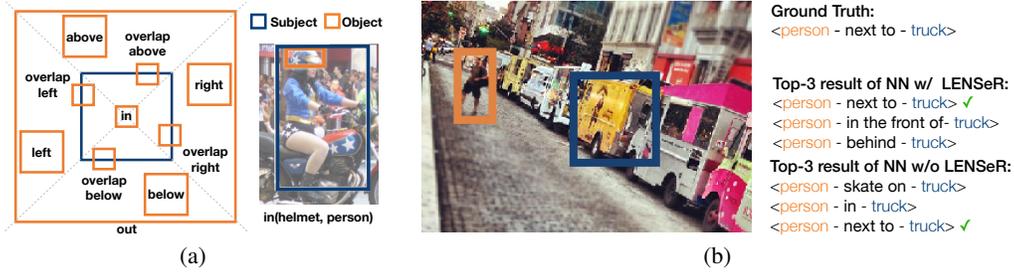

\centering
\subfloat[]{
    \centering \includegraphics[page=7,height=3.1cm]{figures/analysis}
    \label{fig-spatial-relation}
}\quad
\subfloat[]{
    \centering \includegraphics[page=2,height=3.1cm]{figures/analysis.pdf}
    \label{fig:vrd-analysis}
}\quad
\caption{(a) The 10 spatial relations used in the Visual Relation Prediction task, and  an example image illustrating the relation \textsf{in(helmet, person)}. (b) A prediction comparison between neural networks trained w/ and w/o LENSR. A tick indicates a correct prediction. In this example, the misleading effects of the street are corrected by spatial constraints on ``skate on''.}
\end{figure*}

The logical rules for the VRP task consist of logical formulae specifying constraints. In particular, there are three types of propositions in our model: 

\begin{itemize}
	\item \textbf{Existence Propositions} \quad The existence of each object forms a proposition which is $\textsf{True}$ if it exists in the image and $\textsf{False}$ otherwise. For example, proposition $ \textsf{p=exist(person)}$ is $\textsf{True}$ if a person is in the input image and $\textsf{False}$ otherwise.
	\item \textbf{Visual Relation Propositions} \quad Each of the candidate visual relation together with its subject and object forms a proposition. For example, $\textsf{wear(person, glasses)}$ is a proposition and has value $\textsf{True}$ if there is a person wearing glasses in the image and $\textsf{False}$ otherwise.
	\item \textbf{Spatial Relation Propositions} \quad In order to add spatial constraints, e.g. a person cannot wear the glasses if their bounding boxes do not overlap, we define 10 types of spatial relationships (illustrated in Fig.~\ref{fig-spatial-relation}). We assign a proposition for each spatial relation such that the proposition evaluation is \textsf{True} if the relation holds and \textsf{False} otherwise, e.g. $\textsf{in(glasses, person)}$. Furthermore, exactly one spatial relation proposition for a fixed subject object pair is $\textsf{True}$, i.e. spatial relation propositions for a fixed subject object pair are mutually exclusive.
\end{itemize}

The above propositions are used to form two types of logical constraints: 

\begin{itemize}
	\item \textbf{Existence Constraints.} \quad The prerequisite of any relation is that relevant objects exist in the image. Therefore 
$ \textsf{p(sub,obj)} \implies \textsf{(exist(sub)} \land \textsf{exist(obj))}$, where $\textsf{p}$ is any of the visual or spatial relations introduced above.
	\item \textbf{Spatial Constraints.} \quad Many visual relations hold only if a given subject and object follow a spatial constraint. For example, a person cannot be wearing glasses if the bounding boxes for the person and the glasses do not overlap. This observation gives us rules such as
$ \textsf{wear(person, glasses)} \implies \textsf{in(glasses, person)}. $ 

\end{itemize}

For each image $i$ in the training set we can generate a set of clauses $ F_i = \{ c_{ij}\} $  where $c_{ij} = \bigvee_k \textsf{p}_{jk}$. Each clause $ c_{ij} $ represents a constraint in image $ i $ where $j$ is the constraint index, and each proposition $ \textsf{p}_{jk} $ represents a relation in image $ i $ where $k$ is the proposition index in the constraint $c_{ij}$. We obtain the relations directly from the annotations for the image and calculate the constraints based on the definitions above. Depending on the number of objects in image $i$, $F_i$ can contain 50 to 1000 clauses and variables. All these clauses are combined together to form a formula $ \textsf{F} = \bigwedge_{i} F_i $. 

\subsection{VRP Model Training}

\begin{figure*}
\centering
\includegraphics[page=1,width=0.88\textwidth]{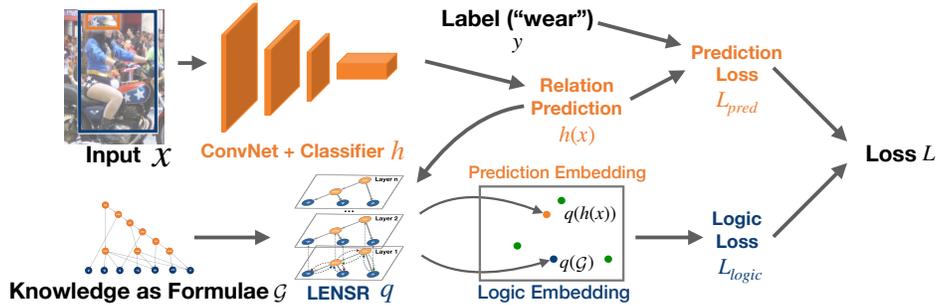}
\caption{\label{fig:vrp-framework} The framework we use to train VRP models with LENSR.}
\end{figure*}

Using the above formulae, we train our VRP model in a two-step manner; first, we train the embedder $ q $ using only $ \textsf{f} $.  The embedder is a GCN with the same structure as described in Sec.~\ref{sec-synthetic}. Then, the embedder is fixed and the target neural network $ h $ is trained to predict relation together with the logic loss (described in Sec.~\ref{subsec:tgtmodeltraining}). The training framework is illustrated in Fig.~\ref{fig:vrp-framework}. In our experiment, $h$ is a MLP with 2 layers and 512 hidden units. To elaborate:

\paragraph{Embedder Training.}
For each training image, we generate an intermediate formula $ \textsf{f}_i $ that only contains propositions related to the current image $ I $. To do so, we iterate over all clauses in $ \textsf{f} $ and add a clause $ c_i $ into the intermediate formula if all subjects and objects of all literals in $ c_i $ is in the image. The formula $ \textsf{f}_i $ is then appended with existence and spatial constraints defined in Sec.~\ref{ssec-prop-cons}. 

To obtain the vector representation of a proposition, we first convert its corresponding relation into a phrase (e.g. $ \textsf{p=wear(person, glasses)} $ is converted to ``person wear glasses''). Then, the GLoVe embeddings~\cite{Pennington2014glove} for each word are summed to form the embedding for the entire phrase. The formula is then either kept as CNF or converted to d-DNNF~\cite{Darwiche2004NewAI} depending on the embedder. Similar to Sec.~\ref{sec-synthetic}, the assignments of $ \textsf{f}_i $ are found and used to train the embedder using the triplet loss (Eqn.~\ref{eq-marginLoss}).

\paragraph{Target Model Training.}
After $ q $ is trained, we fix its parameters and use it to train the relation prediction network $ h $. In our relation prediction task, we assume the objects in the images are known; we are given the object labels and bounding boxes. Although this is a strong assumption, object detection is an upstream task that is handled by other methods and is not the focus of this work. Indeed, all compared approaches are provided with \emph{exactly the same information}. The input to the model is the image, and the labels and bounding boxes of all detected objects, for example: $(I, [(\textsf{table}, [23, 78, 45, 109]), (\textsf{cup}, [10, 25, 22, 50])]).$ The network predicts a relation based on the visual feature and the embedding of class labels:
\begin{equation}
\hat{y}= h([r_1, b_1, r_2, b_2, v]),
\end{equation} 
where $ \hat{y} $ is the relation prediction, $ r_i = \textrm{GLoVe}(\textrm{label}_i) $ is the GLoVe embedding for the class labels of subjects and objects, $b_i$ is the relative bounding box positions of subjects and objects, $ v = \textrm{ResNet}(I_{bbox_1 \cup bbox_2}) $ is the visual feature extracted from the union bounding box of the objects, $ [\cdot] $ indicates concatenation of vectors. We compute the logic loss term as 
\begin{equation}
L_{\textrm{logic}} = \Big\| q(\textsf{f}) - q\Big(\bigwedge_{i} \textsf{p}_i\Big) \Big\|_2^2,
\end{equation} 
where $ \textsf{p}_i $ is the predicate for $ i^{\textrm{th}}$ relation predicted to be hold in the image, and $ \textsf{f} $ is the formula generated from the input information. As previously stated, our final objective function is $L = L_{\textrm{c}} + \lambda L_{\textrm{logic}}$
where 
$ L_{c} $ is the cross entropy loss, and $ \lambda = 0.1$ is a trade-off factor. We optimized this objective using Adam~\cite{Kingma2015AdamAM} with learning rate $ 10^{-3}$.

Although our framework can be trained end-to-end, we trained the logic embedder and target network separately to (i) alleviate potential loss fluctuations in joint optimization, and (ii) enable the same logic embeddings to be used with different target networks (for different tasks). The networks could be further optimized jointly to fine-tune the embeddings for a specific task, but we did not perform fine-tuning for this experiment.





\subsection{Empirical Results}

Table~\ref{table-vrd} summarizes our results and shows the top-5 accuracy score of the compared methods\footnote{Top-5 accuracy was used as our performance metric because a given pair of objects may have multiple relationships, and reasonable relations may not have been annotated in the dataset.}. We clearly see that our GCN approach (with heterogeneous node embedding and semantic regularization) performs far better than the baseline model without logic embeddings. Note also that direct application of d-DNNFs via the semantic loss~\cite{Xu2018} only resulted in marginal improvement over the baseline. A potential reason is that the constraints in VRP are more complicated than those explored in  prior work: there are thousands of propositions and a straightforward use of d-DNNFs causes the semantic loss to rapidly approach $\infty$. Our embedding approach avoids this issue and thus, is able to better leverage the encoded prior knowledge. 
Our method also outperforms the state-of-the-art TreeLSTM embedder~\cite{Tai2015}; since RNN-based embedders are not invariant to variable-ordering, they may be less appropriate for symbolic expressions, especially propositional logic. 


As a qualitative comparison, Fig.~\ref{fig:vrd-analysis} shows an example where logic rules embedded by LENSR help the target task model. The top-3 predictions of neural network trained with LENSR are all reasonable answers for the input image. However, the top 3 relations predicted baseline model are unsatisfying and the model appears misled by the street between the subject and the object. LENSR leverages the logic rules that indicate that the ``skate on'' relation requires the subject to be ``above'' or ``overlap above'' the object, which corrects for the effect of the street. 

\begin{table}
\centering
    \caption{Performance of VRP under different configurations. ``HE'' indicates a heterogeneous node embedder, ``SR'' means the model uses semantic regularization. ``\checkmark'' denotes ``with the respective property`` and ``-'' denotes ``Not Applicable''. The best scores are in bold.}
    \small
    \begin{tabular}{c|c|c|c|c}
    \toprule
    Model & Form & HE & SR & Top-5 Acc. (\%)  \\ \specialrule{0.8pt}{1pt}{1pt}
    without logic & - & - & - & 84.30 \\ \hline
    with semantic loss~\cite{Xu2018} & - & - & - & 84.76 \\ \hline
    \multirow{2}{*}{with treeLSTM embedder~\cite{Tai2015}} & CNF & - & - & 85.76 \\ \cline{2-5} 
     & d-DNNF & - & - & 82.99 \\ \hline  
    \multirow{6}{*}{LENSR} & \multirow{2}{*}{CNF} &  & - & 85.39  \\ 
     &  & \checkmark & - & 85.70 \\ \cline{2-5} 
     & \multirow{4}{*}{d-DNNF} &  &  & 85.37  \\ 
     &  &  & \checkmark & 88.01 \\ 
     &  & \checkmark &  & 90.13  \\  
     &  & \checkmark & \checkmark & \textbf{92.77}  \\ \bottomrule
    \end{tabular}
\label{table-vrd}
\end{table}



\section{Discussion}


Our experimental results show an interesting phenomena---the usage of d-DNNF (when paired with semantic regularization) significantly improved performance compared to other forms. This raises a natural question of whether d-DNNF's embeddings are easier to learn. Establishing a complete formal connection between improved learning and compiled forms is beyond the scope of this work. However, we use the size of space of the formulas as a way to argue about ease of learning and formalize this through the concept of embeddable demanding. 

\begin{definition}[Embeddable-Demanding] 
\label{def:embed-demanding}
Let $L_1, L_2$ be two compilation languages. $ L_1 $ is at least as embeddable-demanding as $ L_2 $ iif there exists a polynomial $p$ such that for every sentence $\alpha \in L_2, \exists \beta \in L_1$ such that (i) $ |\beta| \leq p(|\alpha|)$. Here $|\alpha|, |\beta|$ are the sizes of $\alpha, \beta$ respectively, and $\beta$ may include auxiliary variables. (ii) The transformation from $\alpha$ to $\beta$ is poly time. (iii) There exists a bijection between models of $\beta$ and models of $\alpha$. 
\end{definition}

\begin{theorem}
CNF is at least as embeddable-demanding as d-DNNF, but if d-DNNF is at least as embeddable-demanding as CNF then $P=PP$. \label{thm:emb_demanding}
\end{theorem}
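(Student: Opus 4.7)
My plan is to handle the two directions of the theorem separately, both of which combine classical knowledge-compilation ideas with complexity theory.

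For the first direction (\textbf{CNF is at least as embeddable-demanding as d-DNNF}), I would use a Tseitin-style encoding applied to the d-DNNF DAG. Given $\alpha \in$ d-DNNF over variables $X$, for each internal node $v$ of the DAG I introduce a fresh auxiliary variable $x_v$ and append clauses expressing $x_v \leftrightarrow \bigwedge_{c \in \mathrm{ch}(v)} \ell_c$ if $v$ is an $\land$-gate, or $x_v \leftrightarrow \bigvee_{c \in \mathrm{ch}(v)} \ell_c$ if $v$ is an $\lor$-gate, where $\ell_c$ is the literal at $c$ when $c$ is a leaf and $x_c$ otherwise; finally conjoin the unit clause $x_{\mathrm{root}}$. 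The result $\beta$ is in CNF, has size linear in $|\alpha|$, and is produced in polynomial time, yielding conditions (i) and (ii) of Definition~\ref{def:embed-demanding}. For (iii), each auxiliary $x_v$ is functionally determined by the $X$-assignment (evaluating the subcircuit at $v$), so every model of $\alpha$ over $X$ extends to a \emph{unique} model of $\beta$; the projection map $\mathrm{models}(\beta) \to \mathrm{models}(\alpha)$ is the required bijection.

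For the second direction, I argue by contraposition: I assume d-DNNF is at least as embeddable-demanding as CNF and derive $P=PP$. The assumption supplies, for every CNF $\alpha$, a d-DNNF $\beta$ of size $\le p(|\alpha|)$ computable in polynomial time with a bijection between $\mathrm{models}(\beta)$ and $\mathrm{models}(\alpha)$, which in particular forces $|\mathrm{models}(\alpha)| = |\mathrm{models}(\beta)|$. By Darwiche's classical tractability result for d-DNNF, $|\mathrm{models}(\beta)|$ is computable in time polynomial in $|\beta|$. Composing these two polynomial-time steps gives a polynomial-time algorithm for \#SAT on CNF inputs. Because \#SAT is $\#P$-complete under \emph{parsimonious} (count-preserving) reductions, we obtain $\#P \subseteq FP$.

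The final step is the standard collapse $\#P \subseteq FP \Rightarrow P = PP$: any $L \in PP$ can be written as $\{x : f(x) > \tau(|x|)\}$ for some $\#P$ function $f$ and polynomial-time threshold $\tau$, so if $f \in FP$ then membership is decidable in polynomial time, giving $PP \subseteq P$; the reverse inclusion is trivial. The main obstacle I anticipate is not any single calculation but rather keeping the definitional ingredients aligned: in the first part, I must make sure the Tseitin auxiliaries give a genuine \emph{bijection} (not merely equisatisfiability), which is why determinism of the encoding matters; in the second part, I must invoke $\#P$-completeness of \#SAT under \emph{parsimonious} reductions rather than the weaker Turing reductions, since only the former preserves model counts and therefore lets the polynomial-time counting algorithm on $\beta$ lift to all of $\#P$.
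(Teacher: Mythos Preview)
Your proposal is correct and follows essentially the same approach as the paper: Tseitin encoding of the d-DNNF circuit for the first direction (with the bijection justified exactly as you do, via functional determination of the auxiliaries), and polytime model counting on d-DNNF combined with the bijection to get polytime \#SAT for the second direction. Your write-up is in fact more careful than the paper's, which simply asserts ``therefore $P=PP$'' after obtaining polytime CNF model counting; your detour through $\#P \subseteq FP$ is valid, though one could shortcut it by noting that MAJSAT is $PP$-complete and is trivially in $P$ once exact counts are available.
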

The proof and detailed theorem statement are provided in the Appendix. More broadly, Theorem \ref{thm:emb_demanding} a first-step towards a more comprehensive theory of the embeddability for different logical forms. Future work in this area could potentially yield interesting insights and new ways of leveraging symbolic knowledge in deep neural networks.   
\section{Conclusion}
To summarize, this paper proposed LENSR, a novel framework for leveraging prior symbolic knowledge. By embedding d-DNNF formulae using an augmented GCN, LENSR boosts the performance of deep NNs on model-checking and VRP tasks. 
The empirical results indicate that constraining embeddings to be semantically faithful, e.g., by allowing for node heterogeneity and through regularization, aids model training. Our work also suggests potential future benefits from a deeper examination of the relationship between tractability and embedding, and the extension of semantic-aware embedding to alternative graph structures. Future extensions of LENSR that embed other forms of prior symbolic knowledge could enhance deep learning where data is relatively scarce (e.g., real-world interactions with humans~\cite{Soh2015} or objects~\cite{taunyazov2019towards}). To encourage further development, we have made our source code available online at \texttt{\url{https://github.com/ZiweiXU/LENSR}}.

\subsubsection*{Acknowledgments}

This work was supported in part by a MOE Tier 1 Grant to Harold Soh and by the National Research
Foundation Singapore under its AI Singapore Programme 
[AISG-RP-2018-005]. It was also supported by the National Research Foundation, Prime Minister’s Office, Singapore under its Strategic Capability Research Centres Funding Initiative. 

\bibliographystyle{ieeetr}
\bibliography{main}

\newpage
\appendix 
\section{Supplementary Material}
\subsection{Embedder Training Algorithm}
The algorithm for training the embedder is summarized in Algorithm~\ref{algo-trainEmbedder}.

\begin{algorithm}[ht]
	\textbf{Input}: $\textsf{f}$: CNF formula; $I$: Training images; $m$: Margin \\
	\textbf{Output}: $q$: Embedder
	\begin{algorithmic}[1]
		\STATE $q \leftarrow$ init()
		\REPEAT
		\FORALL{$I_i \in I$}
		\STATE $//$ Create intermediate formula
		\STATE $\textsf{f}_i \leftarrow \{\ \}$
		\STATE $objs \leftarrow$ all\_instances\_in($I_i$)
		\FORALL{$c_i \in \textsf{f}$}
		\IF{all\_instances\_in($c_i$) $\in objs$ }
		\STATE $\textsf{f}_i \leftarrow \textsf{f}_i \cup c_i$
		\ENDIF
		\ENDFOR
		\STATE $\textsf{f}_i \leftarrow$ append\_constraints($\textsf{f}_i$)
		\STATE $\tau_\textsf{T} \leftarrow$ sat\_assig\_of$(\textsf{f}_i)$
		\STATE $\tau_\textsf{F} \leftarrow$ unsat\_assig\_of($\textsf{f}_i$)
		\STATE $\Theta_q \leftarrow \underset{\Theta_q}{\text{argmin}} \ L_{emb} $
		\STATE $q \leftarrow $ update\_with($q$,$\Theta_q$)
		\ENDFOR
		\UNTIL{Convergence}
		\RETURN $q$
	\end{algorithmic}
	\caption{{\bf trainEmbedder} \label{algo-trainEmbedder}}
\end{algorithm}

\subsection{Embeddable-Demanding}
The significant performance improved due to usage of d-DNNF raises the question whether the language represented by d-DNNF has a smaller search space and therefore, potentially easier learning method. To this end, we introduce the concept of embeddable-demanding below

The following theorems uses the standard complexity theoretic terms and we refer to the reader to the standard text~\cite{arora2009computational} for detailed treatment of these concepts.

\begin{definition}[Embeddable-Demanding] 
\label{def:embed-demanding}
Let $L_1, L_2$ be two compilation languages. $ L_1 $ is at least as embeddable-demanding as $ L_2 $ iff there exists a polynomial $p$ such that for every sentence $\alpha \in L_2, \exists \beta \in L_1$ such that (i) $ |\beta| \leq p(|\alpha|)$. Here $|\alpha|, |\beta|$ are the sizes of $\alpha, \beta$ respectively, and $\beta$ may include auxiliary variables. (ii) The transformation from $\alpha$ to $\beta$ is poly time. (iii) There exists a bijection between models of $\beta$ and models of $\alpha$.
\end{definition}

\begin{theorem}
CNF is at least as embeddable-demanding as d-DNNF, but if d-DNNF is at least as embeddable-demanding as CNF then $P=PP$.
\end{theorem}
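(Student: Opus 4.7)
The plan is to prove the two directions separately, using Tseitin-style encoding for the positive direction and polytime model counting of d-DNNFs combined with the $PP$-completeness of \textsf{MAJSAT} for the conditional direction.

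For the first direction, I would argue that CNF is at least as embeddable-demanding as d-DNNF via a Tseitin-style encoding. Given a d-DNNF $\alpha$ represented as a DAG with $n$ nodes, I would introduce one fresh auxiliary variable $x_v$ for every internal ($\wedge$ or $\vee$) node $v$, and for the root node $r$ add the unit clause $x_r$. For each internal node $v$ with children $u_1,\dots,u_k$ I would add the standard Tseitin clauses encoding $x_v \leftrightarrow x_{u_1} \star \dots \star x_{u_k}$ (where $\star$ is $\wedge$ or $\vee$), while leaves are identified with their literals. The resulting CNF $\beta$ has size $O(|\alpha|)$ and is constructed in polynomial time. Because each auxiliary variable is functionally determined by the original variables, every model of $\alpha$ extends to a unique model of $\beta$ and every model of $\beta$ restricts to a model of $\alpha$; this gives the required bijection, completing property (i)--(iii) of Definition \ref{def:embed-demanding}.

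For the second direction, I would proceed by contrapositive of the standard complexity-theoretic kind: assume that d-DNNF is at least as embeddable-demanding as CNF and derive $P = PP$. Given an arbitrary CNF $\alpha$, by assumption we obtain in polynomial time a d-DNNF $\beta$ with $|\beta|\le p(|\alpha|)$ and a bijection between models of $\alpha$ and models of $\beta$. Since d-DNNFs admit linear-time model counting (a classical result from knowledge compilation, e.g.\ Darwiche \cite{Darwiche2001b,Darwiche2002}), we can compute $|\mathrm{Models}(\beta)|$ in polynomial time, which by the bijection equals $|\mathrm{Models}(\alpha)|$. Hence \#SAT is in \textsf{P}. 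In particular, the canonical $PP$-complete problem \textsf{MAJSAT} --- deciding whether a CNF has at least $2^{n-1}$ satisfying assignments --- is solvable in polynomial time, giving $PP \subseteq P$ and therefore $P = PP$.

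The easy part is the Tseitin direction, which is essentially a textbook construction. The main subtlety, and the step I would be most careful with, is the bijection clause in Definition \ref{def:embed-demanding}: I need the bijection to preserve model counts so that the polytime d-DNNF counter actually computes $|\mathrm{Models}(\alpha)|$ exactly. I would explicitly verify that, because the definition allows $\beta$ to contain auxiliary variables, the bijection is between \emph{total} assignments on the respective variable sets, and that this is exactly what makes the counts equal. A secondary point worth stating cleanly is that the reduction from \textsf{MAJSAT} to exact \#SAT is trivial (compare the computed count to $2^{n-1}$), so polytime \#SAT immediately yields polytime \textsf{MAJSAT} and hence $P = PP$.
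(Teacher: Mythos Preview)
Your proposal is correct and follows essentially the same approach as the paper: Tseitin encoding for the forward direction (with the bijection coming from functional determination of the auxiliary variables), and polytime d-DNNF model counting plus the bijection to place \#SAT in $P$ for the conditional direction. The only difference is cosmetic---the paper concludes $P=PP$ directly from polytime \#SAT without naming \textsf{MAJSAT}, whereas you spell out that intermediate step; your version is slightly more explicit but not a different argument.
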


\begin{proof}
(1) Prove that CNF is at least as embeddable-demanding as d-DNNF, i.e. for every formula $\alpha$ in d-DNNF, there exists a polynomial size, and polynomial time computable CNF formula $\beta$ such that there is an one to one polynomial time computable mapping between models of $\beta$ to $\alpha$. 

Observe that d-DNNF represents a circuit, which can be encoded into an equisatisfiable CNF formula of polynomial size due to NP-completeness of CNF. In particular, the usage of  Tseytin encoding~\cite{Tseitin1983} ensures that the resulting CNF is of linear size. Furthermore, let d-DNNF $G$ be defined over the set of variables denoted by $X$, then Tseytin encoding introduces a set of auxiliary variables, say $Y$, for the resulting formula $F$ such that $G(X) = \exists Y F(X \cup Y)$. Therefore, the mapping from models of $G$ to $F$ is achieved just by projection of models of $G$ on $X$.  

(2) Prove that if d-DNNF is at least as embeddable-demanding as CNF then $P=PP$. In other words, if for every formula $\beta$ in CNF,  there exists a polynomial size, and polynomial time computable d-DNNF $\alpha$ such that there is bijection between models of  $\alpha$ and models of $\beta$, then $P=PP$. $P=PP$ implies collapse of entire polynomial hierarchy, in particular $P=NP$.

Assume for every formula $\beta$ in CNF, there exists a polynomial size, and polynomial time computable d-DNNF $\alpha$ such that there is a bijection between models of $\alpha$ and models of $\beta$. Since d-DNNF allows counting in polynomial time and the existence of bijection implies that the number of models of $\alpha$ is equal to that of $\beta$, then we can compute the number of models of an arbitrary CNF formula in polynomial time; therefore $P=PP$. In this context, it is worth noting that the entire polynomial polynomial hierarchy is shown to contain $PP$, i.e., $PH \subseteq PP$~\cite{toda1991pp}. 
\end{proof}

\subsection{Computing Infrastructure}
We trained our models using \texttt{Pytorch} 0.4.1 on one NVIDIA GTX 1080 Ti 12GB GPU.

\subsection{Hyper-parameters Selection}
Our hyper-parameters includes: the margin in triplet loss of the embedder $m$, the semantic regularizer weight $\lambda_r$ and logic loss weight $\lambda$. The ranges considered are $[0.5, 5]$ for $m$; $[0.05, 0.2]$ for $\lambda_r$ and $[0.05, 0.2]$ for $\lambda$. We did grid search and set $m=1.0, \lambda_r=0.1, \lambda=0.1$ across all experiments.

\end{document}